
\documentclass[a4paper]{styles/svproc}
\usepackage[numbers]{natbib}
\usepackage{multicol}
\usepackage[bookmarks=true]{hyperref}

\usepackage{amsmath} 
\usepackage{amssymb}  

\usepackage{hhline}
\usepackage{multicol}
\usepackage{multirow}
\usepackage{graphicx}
\usepackage{diagbox}
\usepackage{float}

\usepackage{booktabs}
\usepackage[table]{xcolor}

\usepackage[]{threeparttable}

\usepackage{enumitem}



\newcommand{\Real}[1]{\mathbb{R}^{#1}}

\newcommand{\X}{\mathcal{X}}

\newcommand{\mK}{\mathbf{K}}

\newcommand{\Rot}{\mathbf{R}}
\newcommand{\tran}{\mathbf{t}}

\newcommand{\SE}[1]{\text{SE({#1})}}
\newcommand{\SO}[1]{\text{SO({#1})}}

\newcommand{\pa}{\mathbf{a}}
\newcommand{\pb}{\mathbf{b}}

\newcommand{\pk}{\mathbf{k}}
\newcommand{\mA}{\mathcal{A}}
\newcommand{\mB}{\mathcal{B}}

\newcommand{\inner}[1]{\langle{#1}\rangle}

\newcommand{\norm}[1]{\Vert #1 \Vert}

\newcommand{\fls}{$\mathsf{FLS}$}
\newcommand{\icp}{$\mathsf{ICP}$}
\newcommand{\ransacicp}{$\mathsf{RANSAC{-}ICP}$}
\newcommand{\flsicp}{$\mathsf{FLS{-}ICP}$}
\newcommand{\goicp}{$\mathsf{Go{-}ICP}$}
\newcommand{\teaser}{$\mathsf{TEASER}$}
\newcommand{\deepgmr}{$\mathsf{DeepGMR}$}
\newcommand{\csr}{$\mathsf{CSR}$}

\DeclareMathOperator*{\argmin}{arg\,min}

\usepackage{listings}
\usepackage{xcolor}

\definecolor{codegreen}{rgb}{0,0.6,0}
\definecolor{codegray}{rgb}{0.5,0.5,0.5}
\definecolor{codepurple}{rgb}{0.58,0,0.82}
\definecolor{backcolour}{rgb}{0.95,0.95,0.92}

\usepackage{ifthen,version}
\newboolean{include-notes}
\setboolean{include-notes}{true}
\newcommand{\tmnote}[1]{\ifthenelse{\boolean{include-notes}}%
  {\textcolor{purple}{\emph{TM says: #1}}}{}}
\newcommand{\ianote}[1]{\ifthenelse{\boolean{include-notes}}%
  {\textcolor{red}{\emph{IA says: #1}}}{}}
\newcommand{\msnote}[1]{\ifthenelse{\boolean{include-notes}}%
  {\textcolor{blue}{\emph{MS says: #1}}}{}}

\usepackage{caption}
\usepackage{subcaption}

\allowdisplaybreaks

%
%

\usepackage{url}

\begin{document}
\mainmatter              
\title{Scale-Invariant Fast Functional Registration}
\titlerunning{Scale-Invariant Fast Functional Registration}  
%
\author{Muchen Sun\inst{1} \and Allison Pinosky\inst{1} \and Ian Abraham\inst{2} \and Todd Murphey \inst{1}}
\authorrunning{Muchen Sun et al.} 
%
\tocauthor{Muchen Sun, Allison Pinosky, Ian Abraham, Todd Murphey}
\institute{Northwestern University, Evanston, IL 60208, USA,\\
\email{muchen@u.northwestern.edu},\\
\and
Yale University, New Haven, CT 06511, USA}

\maketitle              

\begin{abstract}
Functional registration algorithms represent point clouds as functions (e.g. spacial occupancy field) avoiding unreliable correspondence estimation in conventional least-squares registration algorithms. However, existing functional registration algorithms are computationally expensive. Furthermore, the capability of registration with unknown scale is necessary in tasks such as CAD model-based object localization, yet no such support exists in functional registration. In this work, we propose a scale-invariant, linear time complexity functional registration algorithm. We achieve linear time complexity through an efficient approximation of $L^2$-distance between functions using orthonormal basis functions. The use of orthonormal basis functions leads to a formulation that is compatible with least-squares registration. Benefited from the least-square formulation, we use the theory of translation-rotation-invariant measurement to decouple scale estimation and therefore achieve scale-invariant registration. We evaluate the proposed algorithm, named $\mathsf{FLS}$ (\emph{functional least-squares}), on standard 3D registration benchmarks, showing $\mathsf{FLS}$ is an order of magnitude faster than state-of-the-art functional registration algorithm without compromising accuracy and robustness. \fls{} also outperforms state-of-the-art correspondence-based least-squares registration algorithm on accuracy and robustness, with known and unknown scale. Finally, we demonstrate applying $\mathsf{FLS}$ to register point clouds with varying densities and partial overlaps, point clouds from different objects within the same category, and point clouds from real world objects with noisy RGB-D measurements. 
\end{abstract}


\section{Introduction}

Point cloud registration is the problem of transforming one set of points, through rotation, translation, and potentially scaling, in order to align with another set of points. This problem appears as a fundamental component across a variety of tasks in robotics and computer vision, such as object localization \cite{xiang2018posecnn}, medical image processing \cite{balakrishnan2019voxelmorph}, 3D reconstruction \cite{dai2017scannet}, and sensor pose estimation \cite{yokozuka2021litamin2}, etc.

\begin{figure}[!h]
    \centering
    \includegraphics[width=\textwidth]{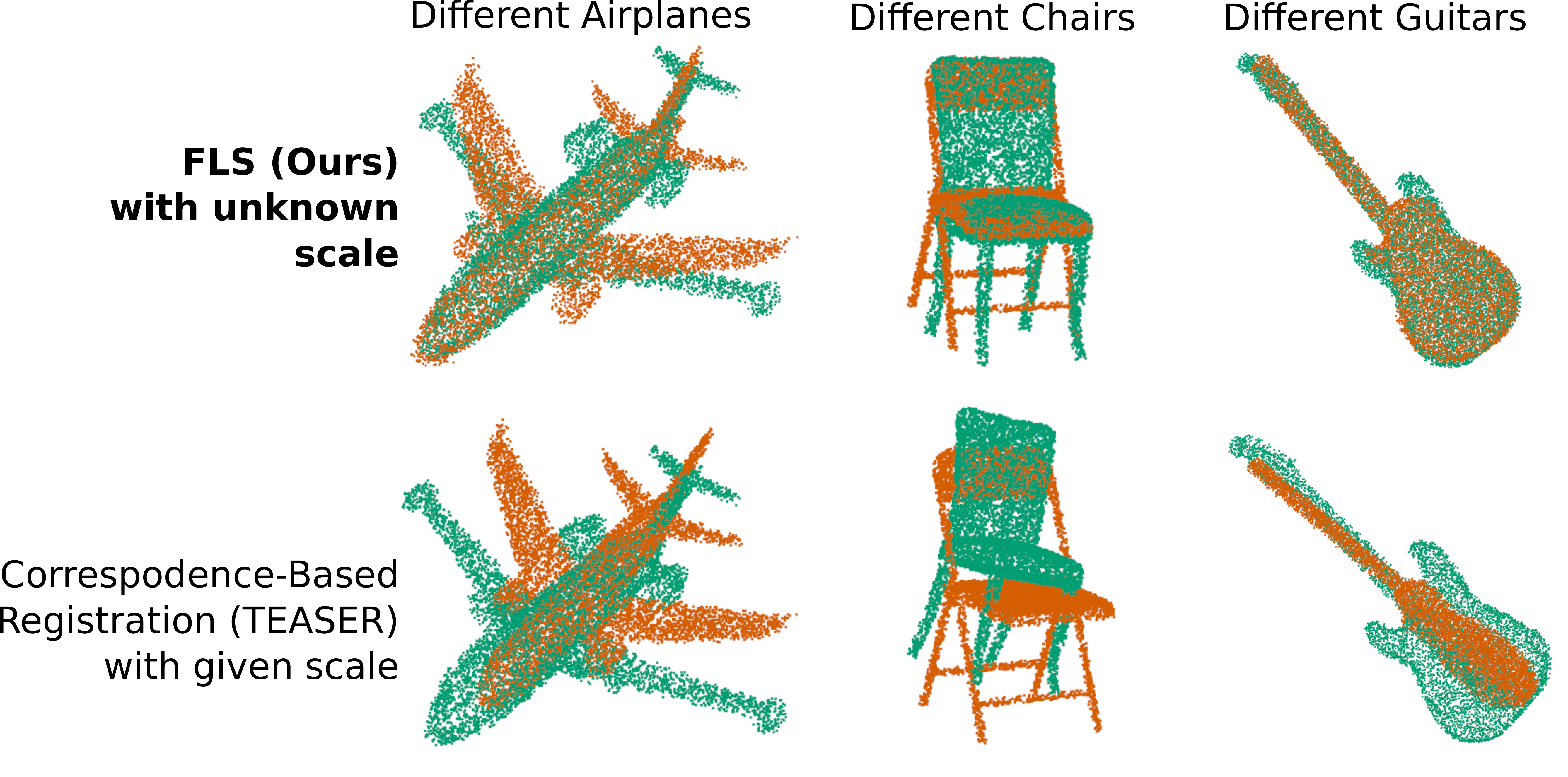}
    \vspace{-20pt}
    \caption{Functional registration does not rely on local geometry-based features but instead capturing the global structure of the point cloud. Our functional registration algorithm can register different objects' CAD models from the same category with unknown scale.}
    \label{fig: cross_objects}
    \vspace{-15pt}
\end{figure}

In its essence, the problem of registration is posed as the minimizing the \emph{distance} between two point clouds. However, point clouds are by nature unordered, and the distance metric between two unordered sets is not well defined. The most commonly used framework computes distance by introducing correspondences between the two point clouds; points that are sampled from the same location or region in the physical world. With correspondences, the unordered point sets can be ordered enabling least-squares optimization problems to be formulated that minimizes the Euclidean distances between corresponding points. The least-squares problem can be solved efficiently by methods like the Gauss-Newton method or the Levenberg-Marquardt method \cite{nocedal2006numerical}, and the computation process is parallelization-friendly. However the accuracy of correspondence estimation is sensitive to noise, and the least squares formulation is known to be vulnerable against incorrect correspondences (outliers) \cite{nocedal2006numerical}. Therefore, in practice it is hard to obtain correspondences that are sufficient for least-squares registration, especially when the point clouds are corrupted by large noise, have varying densities, or are only partially overlapped \cite{yang2020teaser, urbach2020dpdist}.

Another way to compute distance between unordered sets is to abstract the point set into a representation function (e.g., a probability distribution or an occupancy field) and minimize the distance between the representation functions---distance metric in function space is well-defined. This approach, generally  known as functional registration, avoids unreliable local geometry-based feature extraction and correspondence estimation process by capturing the global geometry of the point clouds (e.g., the general shape of the point cloud). However, functional registration algorithms are often computationally expensive, since the computing the distance between two functions is in general computationally intractable. In one of the early works of functional registration \cite{jian2010robust}, the $L^2$-distance between Gaussian-mixtures is used, which has a closed-form solution. But computing the closed-form distance has a quadratic complexity with respect to number of points, and thus is only practical for small scale problems. 

In this work, we combine the computational efficiency and simplicity of least-squares registration with the robustness of functional registration. Inspired by the study of fluid mixing \cite{mathew2005multiscale}, we propose a mapping that maps an \emph{unordered} vector set to an \emph{ordered} orthonormal basis function coefficient set. The mapped function space provides us a permutation-invariant, easy to compute distance metric between two unordered sets and formulates a least-squares registration algorithm without correspondences. This set-to-function mapping is surprisingly simple--all one needs is to average the orthonormal basis functions over the point sets. The proposed registration algorithm, named as $\mathsf{FLS}$ (functional least-squares) has the following contributions:
\begin{enumerate}
    \item $\mathsf{FLS}$ has competitive accuracy and robustness with state-of-the-art functional registration algorithm, while being an order of magnitude faster; 
    \item $\mathsf{FLS}$ has a linear time-complexity w.r.t. number of points, it can register up to 10,000 points in 1 second using only a single CPU\footnote{This is achieved through parallelization using 12 CPU threads.};
    \item $\mathsf{FLS}$ supports decoupled scale estimation, and thus can register rotation and translation with unknown scale. 
\end{enumerate}

We validate $\mathsf{FLS}$ on standard 3D registration benchmarks, showing that $\mathsf{FLS}$ is an order of magnitude faster than state-of-the-art functional registration algorithm without compromising registration accuracy and robustness (Fig.~\ref{fig: cross_objects}). We also show that $\mathsf{FLS}$ outperforms state-of-the-art correspondence-based registration algorithm while maintaining a sub-second registration speed with up to 10,000 points.


\section{Related Works}

\textbf{Correspondence-Based Registration:} The seminal work of iterative closest point (ICP) algorithm \cite{besl1992method} provides a simple yet effective way to establish correspondences. This approach has been extended and widely applied for point cloud-based odometry \cite{biber2003normal, koide2021voxelized}. Another family of methods for estimating correspondences uses feature descriptors to find geometrically distinctive points and match them across point clouds, such as the fast point feature histogram (FPFH) \cite{rusu2009fast} and the normal aligned radial feature (NARF) \cite{steder2011point}. However, with the presence of noise, the local geometry of the point clouds can be blurred and feature descriptors may return wrong correspondences (outliers). 

Besides better correspondence estimation, performance of correspondence-based registration can also be improved through a more robust least-squares solver. Random sampling consensus (RANSAC) \cite{fischler1981random} is one of the most commonly used framework for robust least-squares registration, and its result is often used as the initial estimation for ICP algorithms. However, RANSAC has an exponential runtime w.r.t. the outlier ratio and may still fail with large outlier ratio \cite{yang2020teaser}. Other robust registration algorithms include the Guaranteed Outlier REmovel (GORE) algorithm \cite{bustos2017guaranteed} and the practical maximum clique (PMC) algorithm \cite{bustos2019practical}. Yang et al. propose the truncated least-squares (TLS) formulation and provide first certifiable robust registration algorithm in \cite{yang2020teaser}, which could handle registration problems with outlier ratio higher than $90\%$. The authors also introduce the theory of translation-rotation-invariant measurements (TRIMs) and translation-invariant-measurements (TIMs) for decoupled scale estimation (with unknown rotation and translation) and decoupled rotation estimation (with known scale but unknown translation), given correspondences between the original point sets.

Another approach to estimate correspondences relies on deep learning pipelines. Deep learning methods are first applied to design data-driven feature descriptors, such as in \cite{thomas2019kpconv, tinchev2019skd}. Learning-based methods are also utilized for combined feature extraction and correspondence matching, such as in \cite{zeng20173dmatch, gojcic2019perfect}. \cite{yuan2020deepgmr} propose a probabilistic registration method that extracts pose-invariant correspondences through probabilistic point-to-distribution matching inside a learned latent space. However, deep learning methods are inherently limited to the training set and may not generalize to new objects.

\textbf{Functional Registration:} As its name indicates, functional registration algorithms represent point clouds as functions, such as spatial density distribution or Gaussian-mixture distribution, and perform registration through minimization of the ``difference'' between the functions. Commonly used metrics include the $L^2$-distance in function space or the Kullback-Leibler divergence. In the seminal work \cite{1544863, jian2010robust}, a Gaussian-mixture model is constructed by attaching a Gaussian distribution to each point in a point cloud, and the authors compute the closed form solution for the $L^2$-distance between such Gaussian-mixture distributions. However, this method has a quadratic time-complexity with respect to the number of points and is thus limited to small-scale problems. More recently, Bernreiter et al. \cite{bernreiter2021phaser} introduce a correspondence free and global registration algorithm by correlating the point clouds in the frequency domain through Fourier transform. However, the proposed method is not in a continuous domain, thus the accuracy depends on the resolution of Fourier transform. It also requires extra information such as intensity or range. The work of nonparametric continuous sensor registration \cite{JMLR:v22:20-1468} is the closest to our work. The authors show that by constructing representation functions from the point clouds in a reproducing kernel Hilbert space, minimizing the $L^2$-distance between the functions is equivalent to maximizing the inner product of the functions. However, computing the inner product,  though it can be accelerated through the sparsity of the kernel, still has a worst-case quadratic time complexity. This approach is also specified for visual odometry by integrating extra information such as color, intensity, or semantic information in \cite{Grizzle-RSS-19, unified_cvo}. However, none of the aforementioned algorithm supports registration with unknown scale.


\section{Notation}

In the point cloud registration problem, we are given two point sets $\mA = \{\pa_i\}_{i=1}^{N_\mA}$ and $\mB = \{\pb_j\}_{j=1}^{N_\mB}$, with $\pa_i, \pb_j \in \Real{d}, d\in\{2, 3\}$. We assume there are no redundant elements in the point sets. In this paper, we look for three transformations in order to register point set $\mA$ to $\mB$: scale, rotation, and translation, which are represented by a positive real number $s\in\Real{+}$, a rotation matrix $\Rot\in\SO{d}$, and a $d$-dimensional vector $\tran\in\Real{d}$, respectively. We use the notation $\mA(\Rot, \tran, s)=\{s \Rot \pa_i + \tran\}_{i=1}^{\mathcal{N}_\mathcal{A}}$ to represent the new set after applying the above transformation to every point in set $\mA$. We call point set $\mA$ the \emph{source point set/cloud}, and point set $\mB$ the \emph{target point set/cloud}.

\section{Functional Point Cloud Registration}

\subsection{The Delta-Distance Between Unordered Sets}
For any subset of $\mA\subseteq\Real{d}$, we can attach a Dirac delta function to each of its element, and thus transform a set to a function. We name this function the \emph{delta-mixture function} of a set:
\begin{definition}[Delta-mixture function]
    Given a finite set $\mA{=}\{\pa_1  \dots \pa_{N_\mA}\}$, $\pa_i\in\Real{d}$, the delta-mixture function $\delta_{\mA} : \Real{d} \mapsto \{0, +\infty\}$ is defined as:
    \begin{align}
        \delta_{\mA}(x) & = \frac{1}{N_\mA} \sum_{i}^{N_{\mA}} \delta_{\pa_i}(x) \label{eq: delta_mixture}
    \end{align} where $\delta_{\pa_i}(x)$ is the Dirac delta function with infinite impulse at $\pa_i$.
\end{definition}

The delta-mixture function (\ref{eq: delta_mixture}) can be considered as an occupancy field, as it fully represents the occupancy of the point cloud in the space. This motivates us to use the $L^2$-distance between two delta-mixture functions as the distance between two unordered point sets. Furthermore, delta-mixture functions allow closed-form orthonormal decomposition which can simplify the computation of $L^2$-distance (see Section \ref{subsec: delta_decomposition}), unlike Gaussian-mixtures.

\begin{definition}[Delta-distance between sets] \label{def: exact_delta_distance}
The delta-distance between two sets is defined as the $L^2$-distance between the two delta-mixture functions that are generated from the sets:
\begin{align}
    & d_\delta(\mA, \mB)^2 = \inner{\delta_{\mA}(x){-}\delta_{\mB}(x), \delta_{\mA}(x){-}\delta_{\mB}(x)} = \int_{\X} \big(\delta_{\mA}(x){-}\delta_{\mB}(x)\big)^2 dx \\
    & = \int_{\X} \left( \frac{1}{N_\mA}\sum_{i}^{N_\mA} \delta_{\pa_i}(x) - \frac{1}{N_\mB}\sum_{i}^{N_\mB} \delta_{\pb_i}(x) \right)^2 dx \label{eq: exact_delta_distance}
\end{align}     
\end{definition}

\begin{lemma}
The delta-distance between two sets $\mA$ and $\mB$ is permutation-invariant (if sets $\mA$ and $\mB$ are ordered).
\label{lemma: permutation_invariant}
\end{lemma}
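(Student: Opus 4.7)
The plan is straightforward: I want to show that permuting the indexing of the elements of $\mA$ or $\mB$ leaves the value of $d_\delta(\mA, \mB)^2$ unchanged. The entire argument will rest on the observation that the delta-mixture function in (\ref{eq: delta_mixture}) is defined by a finite sum, which is by construction invariant under reordering of its summands.

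First, I would fix an arbitrary permutation $\sigma$ of the index set $\{1, \dots, N_\mA\}$ and consider the reindexed set $\mA_\sigma = \{\pa_{\sigma(1)}, \dots, \pa_{\sigma(N_\mA)}\}$. Because $\sigma$ is a bijection, the sum defining $\delta_{\mA_\sigma}(x)$ runs over exactly the same collection of Dirac terms as the sum defining $\delta_{\mA}(x)$, only in a different order. By commutativity of addition (applied either pointwise for $x \notin \mA$ or in the distributional sense at the support points), this immediately yields $\delta_{\mA_\sigma}(x) = \delta_{\mA}(x)$. The same argument applies verbatim to $\mB$ under any permutation $\tau$ of its indices.

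Second, I would substitute these equalities directly into the $L^2$-distance expression from Definition \ref{def: exact_delta_distance}. Since the integrand in (\ref{eq: exact_delta_distance}) depends on $\mA$ and $\mB$ only through the pair of functions $\delta_\mA$ and $\delta_\mB$, and these are unchanged by relabelling, the integral takes the same value, giving $d_\delta(\mA_\sigma, \mB_\tau)^2 = d_\delta(\mA, \mB)^2$.

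There is really no obstacle to overcome here; the lemma is essentially a sanity check that the construction in (\ref{eq: delta_mixture}) descends to a well-defined object on unordered sets. The only subtlety worth flagging is interpretive: the parenthetical clause ``if sets $\mA$ and $\mB$ are ordered'' should be read as saying that even if one momentarily imposes an arbitrary ordering in order to write down the sum (\ref{eq: delta_mixture}), the resulting delta-distance does not depend on that choice of ordering. This is exactly what is needed to justify using $d_\delta$ as a legitimate distance between the original unordered point clouds in the registration problem.
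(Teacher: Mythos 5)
Your proof is correct and takes essentially the same route as the paper's, which likewise reduces the claim to the permutation-invariance of the summation in Equation (\ref{eq: delta_mixture}); you simply spell out the bijection argument and the substitution into the integral more explicitly.
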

\begin{proof}
The permutation-invariance of delta-distance can be proved from the permutation-invariance of the summation operator in Equation (\ref{eq: exact_delta_distance}). 
\end{proof}

The $L^2$-distance is symmetric and positive-definite, which makes it an ideal metric to measure geometric similarity. However, this formulation in equation (\ref{eq: exact_delta_distance}) cannot be directly used in practice since the delta-distance is zero only when two point sets are \emph{exactly} the same. Below we propose a continuous relaxation of the delta-distance through orthonormal decomposition of delta-mixture function. This continuous relaxation takes an advantage of the Dirac delta function---the inner product between a Dirac delta function and a continuous function has a closed form solution. This relaxation also allows points to be not exactly at the same location.

\subsection{Continuous Relaxation of the Delta-Distance} \label{subsec: delta_decomposition}

\begin{definition}[Orthonormal basis function]
An infinite set of functions $\{f_1, f_2, \dots\}$ forms the orthonormal bases of the function space with the same domain and co-domain, if the following two properties are satisfied:
\begin{align}
    \inner{f_i, f_j} & = \int_{\X} f_i(x) f_j(x) dx = 0, \forall i\neq j \textsf{ (Orthogonality)} \label{eq: orthogonality} \\
    \inner{f_i, f_i} & = \int_{\X} f_i(x)^2 dx = 1, \forall i \textsf{ (Normality)} \label{eq: normality}
\end{align} where $\inner{\cdot,\cdot}$ denotes the inner product in function space.
\end{definition}

\begin{corollary}[Orthonormal decomposition]
Given a set of orthonormal function bases $\{f_1, f_2, \dots\}$, a function $g(x)$ with the same domain and co-domain can be decomposed as \cite{mallat1993matching}:
\begin{align}
    g(x) = \sum_{k=1}^{\infty} \inner{f_k, g} f_k(x) = \sum_{k=1}^{\infty} \int f_k(x) g(x) dx \cdot f_k(x) .
\end{align} \label{corollary: orthonormal_decomposition}
\end{corollary}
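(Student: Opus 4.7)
The plan is to derive the coefficients of the expansion assuming that such an expansion exists, since the hypothesis already calls $\{f_1, f_2, \dots\}$ an orthonormal \emph{basis} rather than merely an orthonormal family, which bakes completeness in. First I would posit the existence of scalars $c_1, c_2, \dots$ such that $g(x) = \sum_{k=1}^{\infty} c_k f_k(x)$ in the $L^2$ sense, and then try to pin down each $c_k$ explicitly from the orthonormality relations.

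The key computation is to form the inner product of both sides with an arbitrary basis element $f_j$, exchange the inner product with the infinite sum (assuming convergence strong enough to justify this swap), and then apply Equations~(\ref{eq: orthogonality}) and~(\ref{eq: normality}):
\begin{align*}
\inner{f_j, g} = \Big\langle f_j, \sum_{k=1}^{\infty} c_k f_k \Big\rangle = \sum_{k=1}^{\infty} c_k \inner{f_j, f_k} = c_j,
\end{align*}
since by orthogonality every term with $k \neq j$ vanishes and by normality the surviving term reduces to $c_j$. Substituting $c_k = \inner{f_k, g}$ back into the assumed expansion and expanding the inner product as an integral over $\X$ yields exactly the claimed identity.

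The main obstacle, if one insists on full rigor rather than a working derivation, is justifying that the expansion exists and converges in the first place, together with the interchange of inner product and infinite sum. Both require completeness of $\{f_k\}$ in the ambient $L^2$ space and continuity of the inner product; these are standard facts in functional analysis and can be invoked via Parseval's identity once completeness is assumed. Since the corollary's statement already presupposes that $\{f_k\}$ forms a basis (not merely an orthonormal set), this step reduces to a citation such as~\cite{mallat1993matching} rather than new work, and the remainder of the proof is the short orthonormality computation above.
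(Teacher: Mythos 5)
Your derivation is correct and is the standard one: assuming the expansion $g = \sum_k c_k f_k$ exists, pairing with $f_j$ and applying the orthogonality and normality relations forces $c_j = \inner{f_j, g}$, and you rightly flag that the genuine content --- existence and $L^2$-convergence of the expansion, i.e.\ completeness of $\{f_k\}$ --- is what the citation to \cite{mallat1993matching} is carrying. The paper itself offers no proof of this corollary at all (it is stated with only that citation), so your write-up supplies exactly the routine part and defers exactly the part the paper also defers; there is no divergence of approach to compare.

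One caveat worth keeping in mind, though it concerns the paper's later use of the corollary rather than your proof of it: the delta-mixture functions to which this decomposition is subsequently applied are not elements of $L^2$, so the ``same domain and co-domain'' hypothesis is not literally satisfied there and the expansion in Lemma~\ref{lemma: orthonormal_decomposition_delta_mixture} should be read distributionally. Your proof of the corollary as stated for an $L^2$ function $g$ is sound.
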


\begin{corollary} \label{corollary: closed_form_inner_product}
The inner product between a Dirac delta function $\delta_{\pa}(x)$ and a continuous function $g(x)$ has a closed-form solution:
\begin{align}
    \inner{\delta_{\pa}, g} = g(\pa) .
\end{align}
\end{corollary}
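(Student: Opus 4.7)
The plan is to unfold the inner product by its defining integral and then apply the sifting property of the Dirac delta. Starting from
\begin{align}
\inner{\delta_{\pa}, g} = \int_{\X} \delta_{\pa}(x)\, g(x)\, dx,
\end{align}
the desired identity $\inner{\delta_{\pa}, g} = g(\pa)$ follows in one line once the sifting property of $\delta_{\pa}$ is invoked, assuming $g$ is continuous at $\pa$.

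Because $\delta_{\pa}$ is a distribution rather than a classical function, I would make this rigorous along one of two routes. The first is definitional: one takes the evaluation functional $g \mapsto g(\pa)$ \emph{as} the defining property of $\delta_{\pa}$ acting on a space of continuous test functions, in which case the corollary is essentially a restatement of the definition. The second, more constructive route is to realize $\delta_{\pa}$ as the weak limit of a mollifier sequence $\{\eta_\varepsilon\}_{\varepsilon>0}$, for instance normalized Gaussians centered at $\pa$ with variance $\varepsilon^2$, each satisfying $\int_{\X} \eta_\varepsilon = 1$ and concentrating mass at $\pa$ as $\varepsilon \to 0^+$. Then one interprets
\begin{align}
\inner{\delta_{\pa}, g} = \lim_{\varepsilon \to 0^+} \int_{\X} \eta_\varepsilon(x)\, g(x)\, dx,
\end{align}
and a routine uniform-continuity argument for $g$ on a compact neighborhood of $\pa$ yields $\lim_{\varepsilon \to 0^+} \int_{\X} \eta_\varepsilon\, g = g(\pa)$.

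There is no substantive obstacle here; the only subtlety is justifying well-definedness of the integral against a distributional object, which either interpretation above handles. In the paper's intended use, $g$ will be instantiated as a smooth orthonormal basis function, so the continuity hypothesis required for the sifting property is trivially met, and the corollary can then be applied without further qualification.
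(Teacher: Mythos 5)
Your proposal is correct and follows essentially the same route as the paper, which simply invokes the defining (sifting) property of the Dirac delta in a one-line proof; your version is a more careful elaboration of that same idea, adding the mollifier-limit justification and the continuity hypothesis on $g$ that the paper leaves implicit. No further changes are needed.
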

\begin{proof}
This can be proved using the definition of delta function: $\int \delta_\pa(x) dx = 1$.
\end{proof}

\begin{lemma} \label{lemma: orthonormal_decomposition_delta_mixture}
Based on Corollary \ref{corollary: orthonormal_decomposition} and \ref{corollary: closed_form_inner_product}, given a set of orthonormal function bases $\{f_1, f_2, \dots\}$, a delta-mixture function $\delta_{\mA}(x) = \frac{1}{N_\mA} \sum_{i}^{N_\mA} \delta_{\pa_i}(x)$ can be decomposed as:
\begin{align}
    \delta_{\mA}(x) = \sum_{k=1}^{\infty} c_k^\mA \cdot f_k(x),\quad  c_k^\mA = \frac{1}{N_\mA} \sum_{i}^{N_\mA} f_k(\pa_i) \label{eq: decomposed_delta_function} .
\end{align}
\end{lemma}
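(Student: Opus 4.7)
The plan is to derive the stated decomposition directly by plugging the delta-mixture into Corollary \ref{corollary: orthonormal_decomposition} and then evaluating the resulting inner products with Corollary \ref{corollary: closed_form_inner_product}. Since both ingredients are already granted, the argument should be essentially a two-line computation that relies only on the linearity of the inner product $\inner{\cdot,\cdot}$.

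First, I would invoke Corollary \ref{corollary: orthonormal_decomposition} on $\delta_\mA$, writing $\delta_\mA(x) = \sum_{k=1}^\infty \inner{f_k, \delta_\mA}\, f_k(x)$, so that the coefficients of the expansion are precisely $c_k^\mA = \inner{f_k, \delta_\mA}$. Second, I would substitute the definition $\delta_\mA = \tfrac{1}{N_\mA}\sum_i \delta_{\pa_i}$ and pull the finite sum and the scalar out of the inner product by bilinearity, yielding $\inner{f_k, \delta_\mA} = \tfrac{1}{N_\mA}\sum_i \inner{f_k, \delta_{\pa_i}}$. Third, I would apply Corollary \ref{corollary: closed_form_inner_product} to each term to obtain $\inner{f_k, \delta_{\pa_i}} = f_k(\pa_i)$, giving $c_k^\mA = \tfrac{1}{N_\mA}\sum_i f_k(\pa_i)$, which is the claimed formula.

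There is no serious obstacle: the only subtlety is that the orthonormal expansion and the termwise inner product evaluation are being used on a distribution rather than an ordinary $L^2$ function, so strictly speaking the ``decomposition'' must be interpreted in a distributional or weak sense (the partial sums converge to $\delta_\mA$ when paired against sufficiently regular test functions, not pointwise). Since the paper's subsequent use of the expansion is exclusively through inner products against continuous basis functions, this interpretation is fully compatible with the downstream computation of the delta-distance, and I would simply note this reading in passing rather than dwell on it.
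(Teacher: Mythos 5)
Your proposal is correct and follows exactly the route the paper intends: the lemma is stated as an immediate consequence of Corollaries \ref{corollary: orthonormal_decomposition} and \ref{corollary: closed_form_inner_product}, and your two-step computation (expand via the orthonormal decomposition, then evaluate each $\inner{f_k, \delta_{\pa_i}}$ by linearity and the sifting property) is precisely that argument. Your added remark that the expansion must be read in a weak/distributional sense is a sound clarification the paper leaves implicit.
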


\begin{remark}
The delta-mixture function (\ref{eq: delta_mixture}) can be considered as an occupancy field extracted from the point cloud, and the decomposed delta-mixture function (\ref{eq: decomposed_delta_function}) can be considered as an implicit representation of the field, similar to other implicit field representations such as the Neural Radiance Field (NeRF)\cite{NEURIPS2020_55053683}. In practice we use normalized Fourier basis functions as the orthonormal bases, and NeRF shares a similar strategy by using Fourier feature mapping (also called positional encoding). The difference is NeRF does not use Fourier features as orthonormal bases, but instead use them to transform the Neural Tangent Kernel (NTK) of the multi-layer perceptron (MLP) to a stationary kernel \cite{NEURIPS2020_55053683}.
\end{remark}

\begin{theorem}
The delta-distance between two sets (\ref{eq: exact_delta_distance}) can be computed as:
\begin{align}
    d_\delta(\mA, \mB)^2 = \sum_{k=1}^{\infty} \left(  \frac{1}{N_\mA} \sum_{i=1}^{N_\mA} f_k(\pa_i) {-} \frac{1}{N_\mB} \sum_{i=1}^{N_\mB} f_k(\pb_i) \right)^2 = \sum_{k=1}^{\infty} \left(c_k^\mA - c_k^\mB\right)^2 \label{eq: orthonormal_decomposition_l2_distance} .
\end{align} where $\{f_1, f_2, \dots\}$ is the set of orthonormal function bases.
\label{theorem: orthonormal_decomposition_l2_distance}
\end{theorem}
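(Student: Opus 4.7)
The plan is to apply the orthonormal decomposition of Lemma~\ref{lemma: orthonormal_decomposition_delta_mixture} to each delta-mixture function separately, subtract them by linearity, expand the square, and then collapse the resulting double sum using the orthogonality and normality relations (\ref{eq: orthogonality})--(\ref{eq: normality}). In essence this is Parseval's identity for the Hilbert space $L^2(\mathcal{X})$ applied to the basis $\{f_k\}$.

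First, I would write
\begin{align*}
    \delta_\mathcal{A}(x) - \delta_\mathcal{B}(x) = \sum_{k=1}^\infty \left(c_k^\mathcal{A} - c_k^\mathcal{B}\right) f_k(x),
\end{align*}
which follows directly from applying Lemma~\ref{lemma: orthonormal_decomposition_delta_mixture} to $\delta_\mathcal{A}$ and $\delta_\mathcal{B}$ and subtracting term-by-term. Substituting into Definition~\ref{def: exact_delta_distance} and expanding the square yields a formal double sum
\begin{align*}
    d_\delta(\mathcal{A}, \mathcal{B})^2 = \sum_{k=1}^\infty \sum_{l=1}^\infty \left(c_k^\mathcal{A} - c_k^\mathcal{B}\right)\left(c_l^\mathcal{A} - c_l^\mathcal{B}\right) \int_\mathcal{X} f_k(x) f_l(x)\, dx.
\end{align*}

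Second, I would invoke orthogonality to eliminate every cross term with $k \neq l$ and normality to reduce the diagonal integrals to unity. This immediately produces $\sum_{k=1}^\infty (c_k^\mathcal{A} - c_k^\mathcal{B})^2$. Finally, substituting the explicit forms $c_k^\mathcal{A} = \frac{1}{N_\mathcal{A}} \sum_i f_k(\mathbf{a}_i)$ and $c_k^\mathcal{B} = \frac{1}{N_\mathcal{B}} \sum_i f_k(\mathbf{b}_i)$ from Lemma~\ref{lemma: orthonormal_decomposition_delta_mixture} yields the stated identity.

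The main technical obstacle is justifying the interchange of the spatial integral with the two infinite summations, since $\delta_\mathcal{A}$ is not a genuine $L^2$ function but a distribution, so the expansion should be understood formally (or as the $L^2$-limit of truncated partial sums, which is what is actually computed in practice). A clean way to avoid this subtlety is to fix finite truncation levels $K$, apply Fubini on the resulting finite double sum where everything is absolutely integrable, and interpret the identity as the formal limit $K \to \infty$; the finite-$K$ version is exactly what the algorithm uses, so the theorem in its infinite form functions as the limiting Parseval identity motivating the finite-dimensional approximation introduced in the sequel.
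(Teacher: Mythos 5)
Your proposal follows essentially the same route as the paper's own proof in Appendix A: substitute the orthonormal decomposition of Lemma~\ref{lemma: orthonormal_decomposition_delta_mixture} into Definition~\ref{def: exact_delta_distance}, expand the square into a double sum, interchange integral and summation, and collapse via orthogonality and normality. Your additional remark about the distributional nature of $\delta_{\mA}$ and the need to interpret the expansion as a truncated limit is a point the paper glosses over, but it does not change the argument.
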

\begin{proof}
See appendix A.
\end{proof}

\begin{remark}
The structure of equation (\ref{eq: orthonormal_decomposition_l2_distance}) exhibits a similar pattern as the PointNet \cite{Aoki_2019_CVPR}, in the sense that they both use summation to achieve permutation-invariance. The difference is that PointNet has one sum pooling function, whereas our formulation has multiple summations and we show that when the number of summations approaches infinity, equation (\ref{eq: orthonormal_decomposition_l2_distance}) approaches the exact $L^2$-distance between two delta-mixture functions.
\end{remark}

With Theorem \ref{theorem: orthonormal_decomposition_l2_distance}, we can approximate a continuous delta-distance between two sets by choosing \emph{a finite set of continuous orthonormal function bases} to compute (\ref{eq: orthonormal_decomposition_l2_distance}). Below we specify the details for using this approximation for point cloud registration.

\subsection{Functional Point Cloud Registration} \label{sec: functional_point_cloud_registration}

In this paper we choose to use normalized Fourier basis functions for orthonormal decomposition.

\begin{definition}[Normalized Fourier basis functions] Define a function $f:\X\mapsto\Real{}$, where $\X = [L_1^l, L_1^u]\times\cdots\times[L_d^l, L_d^u] \subset\Real{d}$ is a $d-$dimensional rectangular space, $L_i^l$ and $L_i^u$ are the lower and upper bound for the $d$-th dimension, respectively. For any function defined as above, we can define the following normalized Fourier basis functions as the bases for orthonormal decomposition~\cite{mathew2005multiscale}:
\begin{align}
    f_\pk(x) & = \frac{1}{h_\pk} \prod_{i=1}^{d} \cos\left(\bar{k}_i (x_i - L_i^l)\right) \label{eq: fourier_basis}
\end{align} where
\begin{align}
    x & = (x_1, x_2, \dots, x_N) \in \Real{d} , \quad \pk = [k_1, \cdots, k_d] \in [0, 1, 2, \cdots, \mK]^d \subset \mathbb{N}^{d} \nonumber \\
    \bar{k}_i & = \frac{k_i \pi}{L_i^u-L_i^l}, \quad h_\pk = \left( \prod_{i=1}^{d} \frac{L_i^u - L_i^l}{2} \right)^{\frac{1}{2}} \nonumber
\end{align}
\end{definition}

Now, we finally combine (\ref{eq: orthonormal_decomposition_l2_distance}) and (\ref{eq: fourier_basis}) to define the least-squares formulation for functional registration, named as \emph{functional least-squares} ($\mathsf{FLS}$). We first introduce $\mathsf{FLS}$ for rotation and translation estimation with given scale.

\begin{definition}[$\mathsf{FLS}$ registration with known scale]
When the scale $s$ is given, $\mathsf{FLS}$ for rotation and translation estimation is defined as:
\begin{align}
    \argmin_{\Rot,\tran} \sum_{\pk\in[0, \dots, \mK]^d} r_\pk^2(\Rot, \tran) \label{eq: fls_T_obj}
\end{align} where 
\begin{align}
    r_\pk(\Rot, \tran) & = \sqrt{\lambda_\pk} \cdot \Big( \inner{\delta_{\mA(\Rot, \tran, s)}, f_\pk} - \inner{\delta_{\mB}, f_\pk} \Big) \\
    & = \sqrt{\lambda_\pk} \cdot \left( \frac{1}{N_\mA} \sum_{i=1}^{N_\mA} f_\pk(s\cdot\Rot\pa_i+\tran_i) - \frac{1}{N_\mB} \sum_{i=1}^{N_\mB} f_\pk(\pb_i) \right) \label{eq: fls_T_residual} \\
    \lambda_\pk & = \left(1 + \norm{\pk}^2\right)^{-\frac{d+1}{2}}
\end{align} where $\{\lambda_\pk\}$ is a convergent series to bound the $L^2$-distance. 
\end{definition}

This specific form of $\lambda_\pk$ above is inspired by the Sobolev norm formulation in \cite{mathew2005multiscale}, which assigns diminishing weights to basis functions with increasing frequencies. It encourages the algorithm to capture the global structure of the point cloud, and thus more robust against outlier points.

Note that the complexity of evaluating the $\mathsf{FLS}$ residual function (\ref{eq: fls_T_residual}) has a \emph{linear complexity} of number of points. Further, even though in (\ref{eq: fls_T_obj}) the number of residuals is $\mK^d$, in practice we found $\mK=5$ to be sufficient, thus for a 3D registration task, only $125$ residual functions are needed. Furthermore, since (\ref{eq: fls_T_obj}) is a multi-scale measure \cite{mathew2005multiscale}, the number of residuals \emph{does not} depend on the size, scale or density of the point cloud, and thus can be assumed fixed. \\

\noindent \textbf{Decoupled Scale Estimation. } The concept of \emph{translation and rotation invariant measurements} ($\mathsf{TRIMs}$) is proposed in \cite{yang2020teaser} in order to decouple scale estimation from rotation and translation estimation.

\begin{definition}[Translation rotation invariant measurements]
The translation rotation invariant measurements ($\mathsf{TRIMs}$) of a point set $\mA=\{\pa_1, \dots, \pa_{N_\mA}\}$ is a set of scalar measurements:
\begin{align}
    \mathsf{TRIM}_{\mA} = \Big\{ \norm{\pa_i-\pa_j} \quad \Big\vert \quad 1\leq i,j \leq N_\mA, i\neq j \Big\} 
\end{align}
\end{definition}

The intuition behind $\mathsf{TRIMs}$ is that, the distances between one point cloud's own points are only affected by the scale of the point cloud, and two point clouds with same number of points and same scale should have same $\mathsf{TRIMs}$. Based on this concept, the decoupled scale estimation of $\mathsf{FLS}$ is formulated as minimizing the delta-distance between the $\mathsf{TRIMs}$ of two point cloud.

\begin{definition}[$\mathsf{FLS}$ for scale estimation]
Given two point sets with \textbf{unknown} relative rotation and translation, $\mathsf{FLS}$ for scale estimation is defined as the following one-dimensional registration problem:
\begin{align}
    \argmin_{s} \sum_{\pk\in[0, \dots, \mK]} r_\pk^2(s) \label{eq: fls_s_obj}
\end{align} where 
\begin{align}
    r_\pk(s) & = \sqrt{\lambda_\pk} \cdot d_{\delta} (\mathsf{TRIM}_{\mA(s)}, \mathsf{TRIM}_{\mB})^{\frac{1}{2}} \\
    & = 2 \sqrt{\lambda_\pk} \cdot \Bigg( \frac{\sum_{i=1}^{N_\mA} \sum_{j=i+1}^{N_\mA} f_\pk(s\norm{\pa_i-\pa_j})}{(N_\mA-1)N_\mA} - \frac{\sum_{i=1}^{N_\mB} \sum_{j=i+1}^{N_\mB} f_\pk(\norm{\pb_i-\pb_j})}{(N_\mB-1)N_\mB} \Bigg) \label{eq: fls_residual} \\
    \lambda_\pk & = \left(1 + \norm{\pk}^2\right)^{-1}
\end{align}
\end{definition}


\section{Experiments and Application}

\subsection{Implementation Details}

We implement $\mathsf{FLS}$ with the Ceres nonlinear least-squares solver \cite{ceres-solver}. We use its implementation of the Levenberg-Marquardt method to solve the least-squares in (\ref{eq: fls_T_obj}). When estimating the optimal rotation and translation, we optimize directly in the space of $\SE{3}$. Through all the following experiments, we use 5 orthonormal basis functions per dimension (thus 125 residuals for 3D point cloud registration, and 5 residuals for scale estimation). 

Given the fast registration speed of \fls{}, we can use its registration result to initialize \icp{} to further refine the result with almost no speed loss. We name this method as \flsicp{}. We release the source code of $\mathsf{FLS}$ and benchmark tools at \url{https://sites.google.com/view/fls-isrr2022/}.

\subsection{Rationale for Test Algorithms}

We compare \fls{} and \flsicp{} with six other algorithms: iterative closest point algorithm (\icp)\cite{besl1992method}, iterative closest point algorithm with initial estimation from the random sample consensus algorithm ($\mathsf{RANSAC}$) and FPFH features ($\mathsf{RANSAC{-}ICP}$), global iterative closest point algorithm (\goicp)\cite{yang2015go}, truncated least squares estimation and semidefinite relaxation (\teaser)\cite{yang2020teaser}, continuous sensor registration (\csr)\cite{JMLR:v22:20-1468}, and deep Gaussian mixture registration (\deepgmr)\cite{yuan2020deepgmr}. 

We choose \icp{} and \ransacicp{} as the baselines, and \goicp{} provides global optimization for classical \icp{}. We choose \teaser{} as the state-of-the-art robust correspondence-based registration algorithm, we use the FPFH features \cite{rusu2009fast} to estimate correspondences for \teaser{} through out the experiments\footnote{It's unclear how FPFH features can be applied across different scales, and thus for scale estimation, we use the default order the point clouds as correspondences.}. \csr{} is the state-of-the-art functional registration. \deepgmr{} is a deep learning-based pipeline that provides globally optimal registration without iterative optimization. We accelerate \fls{}, \csr{}, and \teaser{} all with 12 CPU threads using OpenMP.

\subsection{Rationale for Experiment Design}

We test algorithms exclusively on the ModelNet40 dataset\cite{Wu_2015_CVPR}, which contains CAD models of 40 categories of objects. These CAD models allow generation of varying types of synthetic point clouds for testing with a given ground truth. This dataset has been widely used as a benchmark for point cloud registration algorithms. The benefit of synthetic dataset is that it provides a systematic and controllable way to examine different aspects of an algorithm. We examine the tested algorithms from the aspects of: (1) robustness against noise; (2) robustness against partial overlap and varying densities; (3) time efficiency; (4) sensitivity to initialization. We scale all the test point clouds into a unit cube before testing to better illustrate the value of evaluation metrics. Furthermore, for every algorithm tested, we align the geometrical centers of the source and target point clouds before registration. This does not eliminate the necessity of translation estimation, but will improve the performance of local optimization-based algorithms, such as \fls{}, \csr{}, and \icp{}. For our algorithm, we define a cube with each dimension being [-1, 1] as the rectangular space in (\ref{eq: fourier_basis}) to bound both the source and target point clouds. 

Some categories in the ModelNet40 dataset contain objects that have more than one correct registration results, for example, any rotation around the center axis of a water bottle can be considered as correct. Thus, we choose ten categories of objects with no obvious symmetric geometry as the test data: \textit{airplane, bed, chair, desk, guitar, mantel, monitor, piano, sofa, stairs}. Furthermore, since \deepgmr{} is trained on the ModelNet40 dataset, we select the first five objects from the test set of the ten selected categories, which create a library of 50 objects for our experiments. In addition, we demonstrate using our algorithm for real world object localization from noisy RGB-D point cloud.

We use five metrics to evaluate the performance of each algorithm tested: (1) rotation error; (2) translation error; (3) running time; (4) failure rate; (5) exact recovery rate. A registration result is considered failed if the rotation error is larger than 45 degrees or translation error is larger than 0.5 meters, such as when the algorithm gets stuck at a poor local minimum or has completely wrong correspondences. We exclude these results when computing the mean and standard deviation of rotation and translation error. A registration result is considered an exact recovery if the rotation error is smaller than 5 degrees and the translation error is smaller than 0.03 meter.

\subsection{Robustness Against Noise}

\begin{figure}[!h]
    \vspace{-5pt}
    \centering    
    \includegraphics[width=\textwidth]{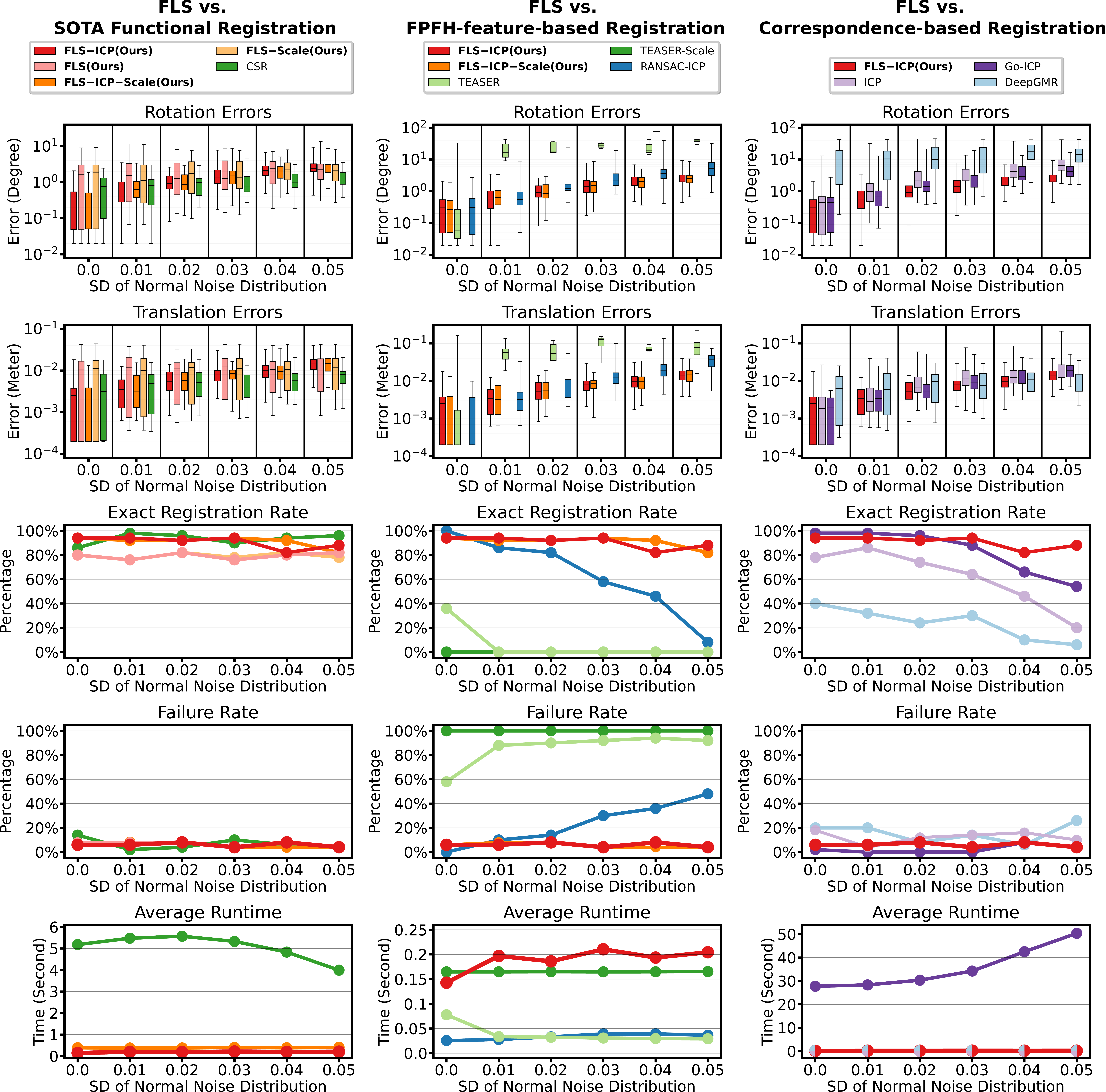}
    \vspace{-15pt}
    \caption{\textit{\textbf{Left}}: Our algorithms reaches a similar level of accuracy and robustness as \csr{}, but is an order of magnitude faster and achieves real-time registration speed ($< 0.1s$ per registration). \textit{\textbf{Middle}}: \fls{} outperforms FPFH feature-based registration algorithms with superior robustness against noise. $\mathsf{TEASER{-}Scale}$ has 100\% failure rate due to lack of sufficient correspondences. \textit{\textbf{Right}}: \fls{} outperforms other correspondence-based registration algorithms on both robustness against noise and computation efficiency. The runtime curves of some algorithms overlap due to the high runtime of \goicp. (Note: failed registration results are omitted from all rotation and translation plots.)}
    \label{fig: noise_test}
    \vspace{-2em}
\end{figure}

For each object, we generate a point cloud with 1024 points from its CAD model\footnote{We use the pcl\_obj2ply tool from the PCL library~\cite{Rusu_ICRA2011_PCL}.}, and then scale the point cloud to a unit cube. For ground-truth rotation, we first uniformly sample a vector $\mathbf{u}=[u_1, u_2, u_3]\in\Real{3}_{0^+}$, then we uniformly sample an angle $\theta$ from $[-\frac{\pi}{2}, \frac{\pi}{2}]$ to generate a random rotation matrix $\Rot = \exp([\mathbf{u}\theta]_{\times})$. We uniformly sample the ground-truth translations from $[1, 2]$ for each axis\footnote{Due to the alignment of geometrical centers of source and target point clouds in pre-processing, all the tested algorithms estimate the translation within a unit cube.}. Lastly, we randomly shuffle the source cloud before each test. We add noise sampled from zero-mean normal distributions with standard deviation varying from 0.01 to 0.05. For all algorithms we initialize the estimation as an identity matrix (no rotation and translation).

To test the scale-invariance of our algorithm, we randomly sample scale factor between 2 to 5 and apply our algorithm and \teaser{} (labeled as $\mathsf{FLS{-}Scale}$, $\mathsf{FLS{-}ICP{-}Scale}$, and $\mathsf{TEASER{-}Scale}$, respectively) to recover rotation and translation with unknown scale. For algorithms tested we initialize the scale estimation as 1. The experimental results are shown in Fig.~\ref{fig: noise_test}. Note that, even though $\mathsf{FLS{-}ICP{-}Scale}$ and $\mathsf{TEASER{-}Scale}$ are tested with unknown scale, we present the results alongside tests with known scale for comparison purposes.

\subsection{Robustness Against Partial Overlap And Varying Densities}

\begin{figure}[!h]
    \vspace{-25pt}
    \centering
    \includegraphics[width=0.95\textwidth]{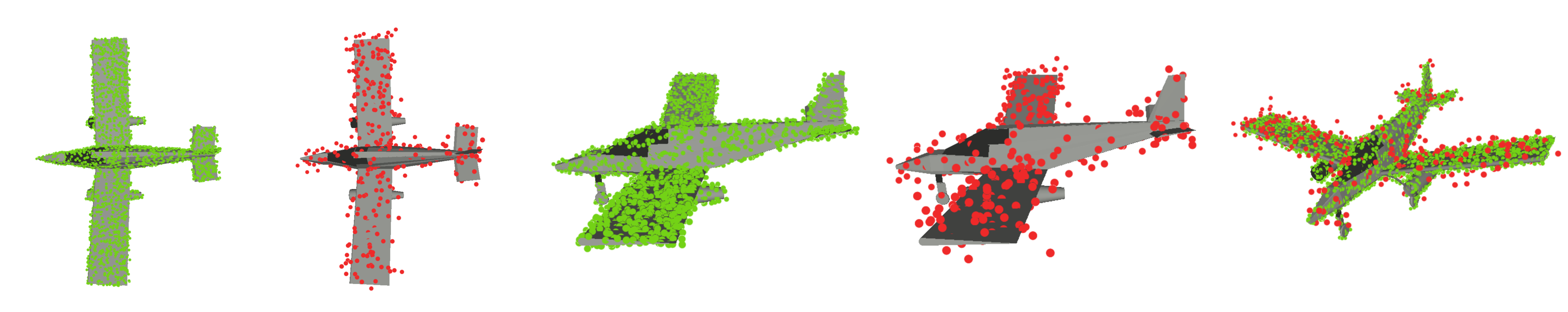}
    \vspace{-10pt}
    \caption{Example of a synthetic point cloud measurement (red) and the dense noiseless point cloud (green) generated from the CAD model.}
    \label{fig: synthetic_measurement}
    \vspace{-15pt}
\end{figure}

To evaluate the performance of the tested algorithms on point clouds with partial overlap and varying densities, we design the task of registering an object from CAD model to noisy real world measurement. For every object tested in the last experiment on robustness against noise, we first generate a uniformly covered, dense, noiseless point cloud with 4096 points from the CAD model, then we simulate a ``real world'' point cloud measurement of the same object by synthesising a sparse point cloud from only three views. This synthesized point cloud measurement only contains 512 points, and we add normally distributed noise with zero mean and standard deviation of $0.02$. We choose this noise level to match the commonly observed real world noise from commercial RGB-D cameras. Furthermore, since the synthesized point cloud measurement is generated from only three views, it will not cover the whole surface of the object and the covered areas will have varying densities. An example of the synthetic point cloud measurement is shown in Fig. \ref{fig: synthetic_measurement}. Similar to the last experiment on robustness against noise, we also generate random scale from $[2, 5]$ to test the scale-invariance of our algorithm. The experiment results are shown in Table \ref{table: partial_overlap}.

\begin{table*}
\centering
\vspace{-10pt}
\begin{tabular*}{\textwidth}{c | @{\extracolsep{\fill}} ccccc}
\hline
        & \begin{tabular}[c]{@{}l@{}}Rotation\\Error(deg.)\end{tabular} & \begin{tabular}[c]{@{}l@{}}Translation\\Error(meter)\end{tabular} & \begin{tabular}[c]{@{}l@{}}Runtime\\(second)\end{tabular} & \begin{tabular}[c]{@{}l@{}}Exact\\Recovery\\Rate\end{tabular} & \begin{tabular}[c]{@{}l@{}}Failure\\Rate\end{tabular}  \\ 
\hline
$\mathsf{FLS}$ & 9.22${\pm}$9.33 & 0.09${\pm}$0.07 & 0.16${\pm}$0.05 & 8\% & 10\% \\ 
$\mathsf{FLS{-}ICP}$ & 2.57${\pm}$3.03 & 0.04${\pm}$0.06 & 0.17${\pm}$0.05 & 56\% & 8\% \\ 
$\mathsf{FLS{-}Scale}$ & 9.56${\pm}$9.04 & 0.10${\pm}$0.08 & 2.88${\pm}$0.68 & 4\% & 10\% \\ 
$\mathsf{FLS{-}ICP{-}Scale}$ & 2.35${\pm}$1.42 & 0.06${\pm}$0.07 & 2.89${\pm}$0.68 & 32\% & 8\% \\ 
$\mathsf{CSR}$ & \bf{2.08${\pm}$1.14} & 0.04${\pm}$0.07 & 8.80${\pm}$16.82 & 38\% & 10\% \\ 
$\mathsf{DeepGMR}$ & 22.02${\pm}$12.76 & 0.16${\pm}$0.12 & \bf{0.007${\pm}$0.002} & 2\% & 62\% \\ 
$\mathsf{TEASER}$ & 43.13${\pm}$1.69 & 0.26${\pm}$0.03 & 0.25${\pm}$0.08 & 0\% & 96\% \\ 
$\mathsf{ICP}$ & 3.32${\pm}$3.75 & 0.06${\pm}$0.09 & 0.012${\pm}$0.002 & 50\% & 6\% \\
$\mathsf{Go{-}ICP}$ & 2.20${\pm}$1.80 & 0.04${\pm}$0.07 & 64.44${\pm}$28.43 & \bf{58\%} & \bf{4\%} \\
$\mathsf{RANSAC{-}ICP}$ & 2.12${\pm}$0.88 & \bf{0.03${\pm}$0.03} & 0.09${\pm}$0.03 & 28\% & 54\% \\
\hline
\end{tabular*}
\caption{When registering point clouds with partial overlap and varying densities, $\mathsf{FLS{-}ICP}$ shares the highest recovery rate with the globally optimized \goicp, while being \emph{two} orders of magnitudes faster. Our algorithm's rotation and translation error is also close to the top. The running time of \fls{} increases with unknown scale because of the high computation complexity of generating TRIMs.  (Note: failed registration results are omitted from all rotation and translation results. We do not report $\mathsf{TEAER{-}Scale}$ due to its 100\% failure rate.)}
\label{table: partial_overlap}
\vspace{-20pt}
\end{table*}

\begin{figure}[!h]
    \centering
    \includegraphics[width=\textwidth]{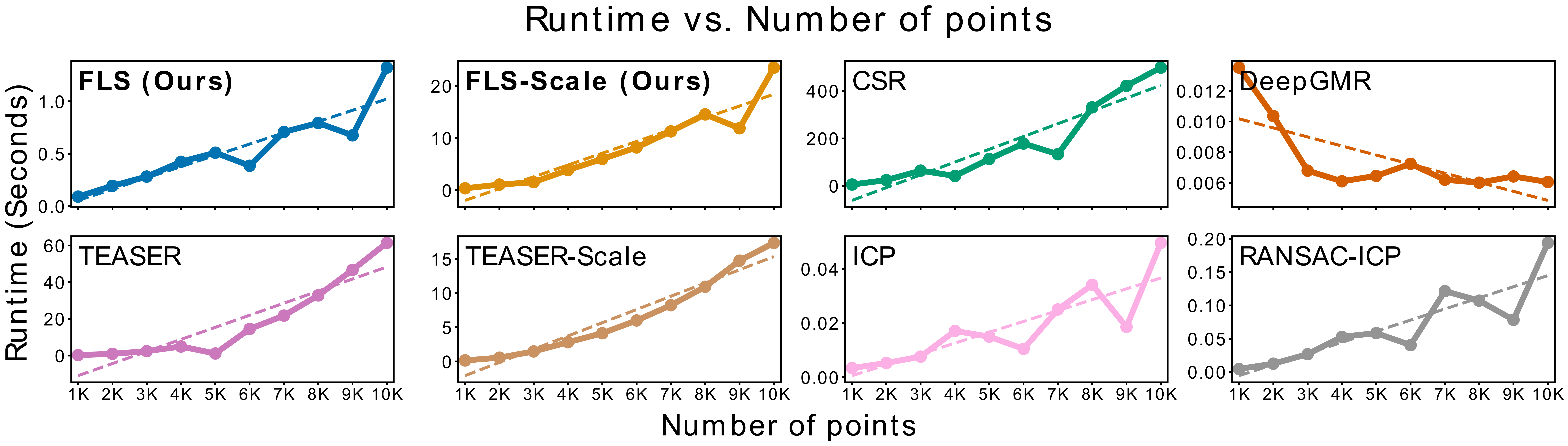}
    \vspace{-20pt}
    \caption{Experiment results on time efficiency, we fit dashed straight line to every curve. \fls{} (with known scale) exhibits a linear time complexity. \fls{} with unknown scale exhibits a superlinear time complexity due to the high computation cost of generating TRIMs, but it is still faster than \csr{} and \teaser{} (with known and unknown scale).}
    \label{fig: time_efficiency}
    \vspace{-2em}
\end{figure}

\subsection{Time Efficiency}

To test the time efficiency of the algorithms, we generate point clouds with numbers of points varying from 1000 to 10000 and record the registration time. The results are reported in Fig.~\ref{fig: time_efficiency}. We did not test time efficiency with \goicp{} due to the high computation time of the Branch-and-Bound scheme, which is already reflected in the last two experiments.

\subsection{Sensitivity to Initialization}

The \fls{} objectives (\ref{eq: fls_T_obj}) and (\ref{eq: fls_s_obj}) are not convex, in practice we found the scale estimation can always converge to a good minimum, but it is not the case for rotation and translation registration. To better understand how the influence from the non-convexity of the objective (\ref{eq: fls_T_obj}), we generate ground-truth rotation from rotation angles varying from $\frac{\pi}{6}$ to $\pi$ and initialize all algorithms with an identity matrix (no rotation and translation). We compare our algorithm with other local registration algorithms: \csr{} and \icp{}. We report each algorithm's exact recovery rate, failure rate and average rotation error in Fig.~\ref{fig: initialization_sensitivity}.

\begin{figure}[!h]
    \vspace{-5pt}
    \centering
    \includegraphics[width=\textwidth]{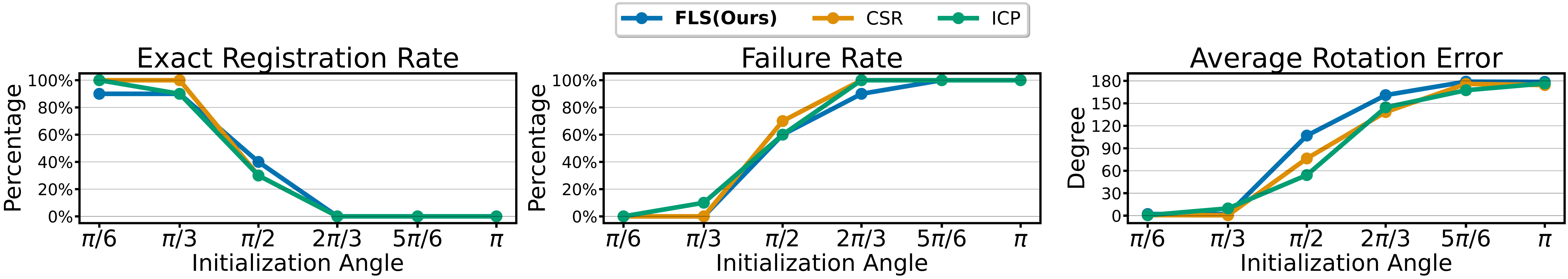}
    \vspace{-15pt}
    \caption{Experiment results on sensitivity to initialization. All three local registration algorithms tested share similar sensitivities to initialization.}
    \label{fig: initialization_sensitivity}
    \vspace{-5pt}
\end{figure}

\begin{figure}[!h]
    \centering
    \includegraphics[width=\textwidth]{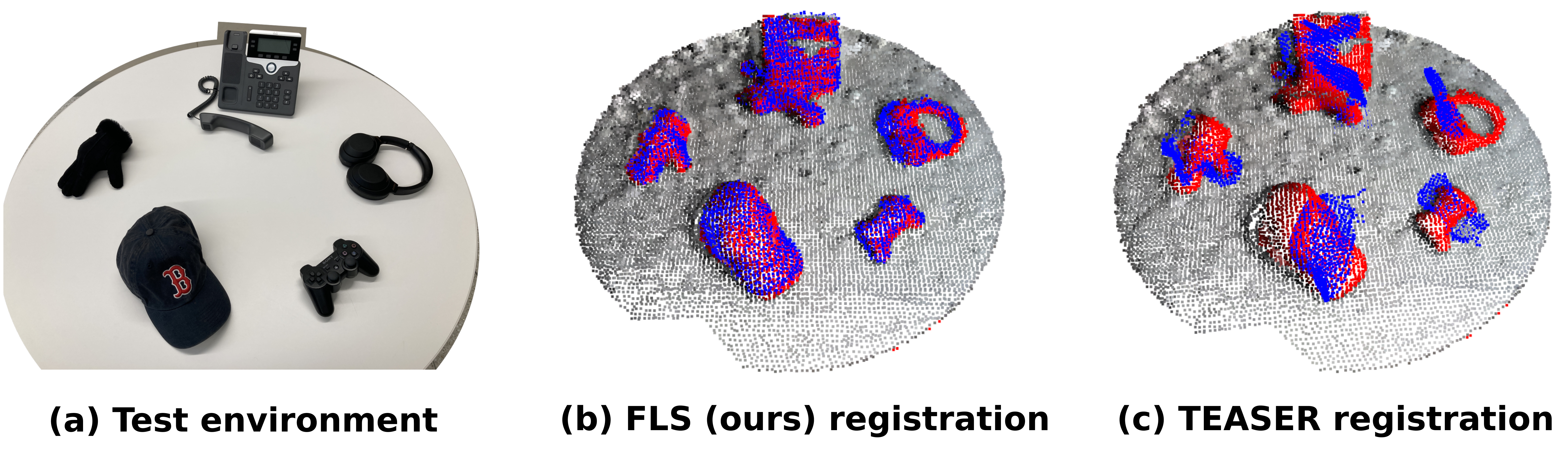}
    \vspace{-2em}
    \caption{$\mathsf{FLS}$ is able to register all objects (hat, controller, headphones, phone, glove) in the test environment, while \teaser{} exhibits large rotation error. \emph{This experiment further demonstrates the benefits of the correspondence-free nature of our algorithm.}}
    \label{fig: real_world_test}
    \vspace{-15pt}
\end{figure}

\subsection{Real World Application}

We use a RealSense D435 camera and the ROS package $\mathsf{rtabmap\_ros}$ to generate real-world test point clouds. We generate two point clouds of the same test environment, as shown in Fig.~\ref{fig: real_world_test}(a). First, we translate and rotate the second cloud away from the first cloud. Then, we crop the point clouds to each object using the RGB data. This process generates five objects for registration from each test environment point cloud. In Fig.~\ref{fig: real_world_test} (b)(c), the objects from the first point cloud are shown in red and those from the second point cloud are shown in blue. We register each object individually and show all of the objects aligned on the original (not color-filtered) point cloud. Note that the point clouds are generated from different motions of the camera, thus there are no exact point-to-point correspondences. The results are shown in Fig.~\ref{fig: real_world_test}.

\vspace{-5pt}


\section{Conclusions}

In this work, we propose a new scale-invariant functional registration algorithm with a new distance metric between unordered point sets (delta-distance). We show that our algorithm has linear complexity with respect to number of points and can be an order of magnitude faster than state-of-the-art functional registration algorithm, without compromising accuracy and robustness. Our algorithm also outperforms state-of-the-art correspondence-based algorithms on accuracy and robustness even with unknown scale and partial overlap. In the future, we will look into improving the robustness of the delta-distance and explore using the delta-distance as the loss function in learning-based pipelines. \\

{\footnotesize \noindent\textbf{Acknowledgements. } This material is supported by NSF Grant CNS-1837515, ONR Grant N00014-21-1-2706, and HRI Grant HRI-001479. Any opinions, findings and conclusions or recommendations expressed in this material are those of the authors and do not necessarily reflect the views of the aforementioned institutions.}



%
%

\bibliographystyle{IEEEtran}
{\footnotesize
\bibliography{ref}}

\newpage

\begin{appendix} 
\section{Proof} \label{appendix: proof}

\noindent\textbf{Proof for Theorem \ref{theorem: orthonormal_decomposition_l2_distance}}
\begin{proof}
Based on Definition \ref{def: exact_delta_distance} and Lemma \ref{lemma: orthonormal_decomposition_delta_mixture}, the  delta-distance can be written as:
\begin{align}
    & d_\delta(\mA, \mB)^2 \\
    & = \inner{\delta_{\mA}(x){-}\delta_{\mB}(x), \delta_{\mA}(x){-}\delta_{\mB}(x)} \\
    & = \int_{\X} \big(\delta_{\mA}(x){-}\delta_{\mB}(x)\big)^2 dx \\
    & = \int_{\X} \left( \frac{1}{N_\mA}\sum_{i}^{N_\mA} \delta_{\pa_i}(x) - \frac{1}{N_\mB}\sum_{i}^{N_\mB} \delta_{\pb_i}(x) \right)^2 dx \\
    & = \int_{\X} \left( \sum_{k=1}^{\infty} c_k^\mA \cdot f_k(x) - \sum_{k=1}^{\infty} c_k^\mB \cdot f_k(x) \right)^2 dx \\
    & = \int_{\X} \left( \sum_{k=1}^{\infty} (c_k^\mA - c_k^\mB) \cdot f_k(x) \right)^2 dx \\
    & = \int_{\X} \left[ \sum_{k_1=0}^{\infty} \sum_{k_2=0}^{\infty} (c_{k_1}^\mA-c_{k_1}^\mB) (c_{k_2}^\mA-c_{k_2}^\mB) f_{k_1}(x) f_{k_2}(x) \right] dx \\
    & = \sum_{k_1=0}^{\infty} \sum_{k_2=0}^{\infty} \left[ (c_{k_1}^\mA-c_{k_1}^\mB) (c_{k_2}^\mA-c_{k_2}^\mB) \left( \int_{\X} f_{k_1}(x) f_{k_2}(x) dx \right) \right] \\
    & = \sum_{k_1=0}^{\infty} \sum_{k_2=0}^{\infty} \left[ (c_{k_1}^\mA-c_{k_1}^\mB) (c_{k_2}^\mA-c_{k_2}^\mB) \inner{f_{k_1}, f_{k_2}} \right] 
\end{align}
By using the orthogonality and normality of the basis functions (\ref{eq: orthogonality}, \ref{eq: normality}), the double summation above can be simplified as a single summation:
\begin{align}
    & \sum_{k_1=0}^{\infty} \sum_{k_2=0}^{\infty} \left[ (c_{k_1}^\mA-c_{k_1}^\mB) (c_{k_2}^\mA-c_{k_2}^\mB) \inner{f_{k_1}, f_{k_2}} \right] \\
    & = \sum_{k=0}^{\infty} \left[ (c_{k}^\mA-c_{k}^\mB) (c_{k}^\mA-c_{k}^\mB) \inner{f_{k}, f_{k}} \right] \\
    & = \sum_{k=0}^{\infty} (c_{k}^\mA-c_{k}^\mB)^2
\end{align} which completes the proof.
\end{proof}

\end{appendix}

\end{document}